\definecolor{darkblue}{rgb}{0, 0, 0.5}
\begin{document}
\issue{51}{3}{2025}

\runningtitle{Unsupervised Chunking with Hierarchical RNN}
\runningauthor{Wu et al.}

\pageonefooter{Action editor: Deyi Xiong. Submission received: 11 September 2023; revised version received: 17 September 2024; accepted for publication: 23 October 2024.\\}

\title{The Emergence of Chunking Structures with Hierarchical RNN}

\author{Zijun Wu$^{1}$, Anup Anand Deshmukh$^{2}$\thanks{Work partially done as a co-op intern at the University of Alberta/Amii.}, Yongkang Wu$^{3}$, \\Jimmy Lin$^{2}$, Lili Mou$^{1,4}$}

\affilblock{
    \affil{University of Alberta, Department of Computing Science, Alberta Machine
Intelligence Institute (Amii)\\ zijun4@ualberta.ca, doublepower.mou@gmail.com}
    \affil{University of Waterloo, David R. Cheriton, School of Computer Science\\ aa2deshmukh@uwaterloo.ca, jimmylin@uwaterloo.ca}
    \affil{Huawei Poisson Lab\\ wuyongkang7@huawei.com}
    \affil{Canada CIFAR AI Chair}
}

\maketitle

\begin{abstract}
In Natural Language Processing (NLP), predicting linguistic structures, such as parsing and chunking, has mostly relied on manual annotations of syntactic structures. This paper introduces an unsupervised approach to chunking, a syntactic task that involves grouping words in a non-hierarchical manner. We present a Hierarchical Recurrent Neural Network (HRNN) designed to model word-to-chunk and chunk-to-sentence compositions. Our approach involves a two-stage training process: pretraining with an unsupervised parser and finetuning on downstream NLP tasks. Experiments on multiple datasets reveal a notable improvement of unsupervised chunking performance in both pretraining and finetuning stages. Interestingly, we observe that the emergence of the chunking structure is transient during the neural model's downstream-task training. This study contributes to the advancement of unsupervised syntactic structure discovery and opens avenues for further research in linguistic theory.\footnote{This paper is a substantially extended version of an EMNLP Findings paper~\cite{deshmukh-etal-2021-unsupervised-chunking}. The main extension is a novel approach that improves unsupervised chunking performance in a downstream NLP task, which is an interesting result by itself. Part of the text is reused with permission from our previous work~\cite{deshmukh-etal-2021-unsupervised-chunking}, available at \url{https://aclanthology.org/2021.findings-emnlp.307.pdf}, licensed under \href{https://creativecommons.org/licenses/by/4.0/}{CC BY 4.0}. Our code and results are available at \url{https://github.com/MANGA-UOFA/UCHRNN}}
\end{abstract}

\section{Introduction}\label{Sec:intro}
Understanding the linguistic structure of language, such as parsing and chunking, is an important research topic in natural language processing (NLP). While most previous studies~\cite{kudo2001chunking, zhang2002text, pradhan-etal-2004-shallow} use supervised machine learning methods to predict linguistic structures and can achieve high performance, they rely heavily on manual annotations of syntactic structures like Treebanks~\citep{marcus-etal-1993-building}. As a result, there has been a growing interest in unsupervised linguistic structure discovery in recent years~\citep{Kim2020Are, shen2018neural, shen2018ordered}, which is important to NLP research because it sheds light on linguistic theories and can potentially benefit low-resource languages. Previous work on unsupervised syntactic structure discovery has mainly focused on unsupervised constituency parsing~\cite{Kim2020Are, kim-etal-2019-compound, shen2018neural, shen2018ordered} and dependency parsing~\citep{klein-manning-2004-corpus, gillenwater-etal-2010-sparsity, shen-etal-2021-structformer}.

In this work, we address unsupervised chunking, another meaningful task for discovering syntactic structure. Unlike parsing that induces tree structures from a sentence, chunking aims to group consecutive words of a sentence in a non-hierarchical fashion, and each chunk can be intuitively thought of as a phrase~\citep{sang2000introduction, clark-2001-unsupervised}. In fact, unsupervised chunking has wide applications in real-world scenarios, as understanding text fundamentally requires finding spans like noun phrases and verb phrases. It would benefit other NLP tasks, such as keyword extraction~\citep{firoozeh_nazarenko_alizon_daille_2020}, named entity recognition~\citep{sato-etal-2017-segment}, open information extraction~\citep{niklaus-etal-2018-survey, pmlr-v162-borgeaud22a, izacard2022few}, and logical reasoning~\citep{wu2023weakly}.

We propose a hierarchical recurrent neural network (HRNN) to accomplish the chunking task. Our HRNN is designed to explicitly model the word-to-chunk composition by a lower-level RNN, and chunk-to-sentence composition by an upper-level RNN. We further design a trainable chunking gate that switches between lower word-level RNN and upper phrase-level RNN, which is also used for chunk prediction.

We propose a two-stage training framework for HRNN without using linguistic annotations, namely, pretraining with an unsupervised parser and finetuning with a downstream NLP task. In the first stage of pretraining, we adopt the recent advances of unsupervised parsing~\citep{Kim2020Are, kim-etal-2019-compound}, and propose a maximal left-branching heuristic to induce sensible (albeit noisy and imperfect) chunk labels from an unsupervised parser. In the second stage, we finetune HRNN chunking by feeding its upper-level RNN representations to a downstream-task network. Our intuition is that HRNN serves as the bottleneck, and the downstream-task finetuning can force the HRNN to learn more meaningful latent structures.

We conducted experiments on the CoNLL-2000 dataset~\citep{sang2000introduction} and the English Web Treebank~\citep{annwebtreebank}. Results show that the unsupervised parser-pretrained HRNN significantly improves the best-performing unsupervised baseline, with a considerable margin of up to $6$ percentage points in terms of the phrase F1 score.
We then finetuned the pretrained HRNN model with three downstream text generation tasks: summarization, paraphrasing, and translation. Compared with our pretrained model, it achieves a chunking performance improvement of up to $6$ percentage points in downstream datasets, and $2$ percentage points on CoNLL-2000. The results suggest that our method not only bridges the gap between supervised and unsupervised chunking methods but also shows the generalizability across different downstream tasks and datasets.

In our experiment, we also find an intriguing linguistic phenomenon during the finetuning stage: the neural model's emergence of linguistic structure is transient. That is, although the downstream performance consistently improves, the chunking performance improves significantly but only in the first few thousand steps. Then, it starts to decrease and eventually drops to the initial level of the finetuning stage or even lower, suggesting that linguistic structures are a convenient vehicle for a downstream task when the training begins and the model capacity is small. However, the neural network tends to discard such linguistic structures to achieve higher performance in the downstream task as the training proceeds.

In summary, our main contributions are as follows:
\begin{itemize}
    \item We address the task of unsupervised chunking, and propose a hierarchical recurrent neural network (HRNN) that can explicitly model the word-to-chunk and chunk-to-sentence compositions.
    \item We propose a two-stage training framework for HRNN, largely outperforming previous unsupervised chunking methods.
    \item We observe the neural model’s emergence of chunking structure is transient, which may inspire further study of linguistic theory.
\end{itemize}

\section{Approach}
In this section, we first introduce the proposed Hierarchical RNN (HRNN) model. Then, we discuss the process of pretraining HRNN with the induced labels from a state-of-the-art unsupervised parser. Finally, we propose to finetune the model with a downstream task to improve chunking performance.

\subsection{Hierarchical RNN}
\label{sec: Hierarchical RNN}

\begin{figure}[!t]
\includegraphics[width=\textwidth]{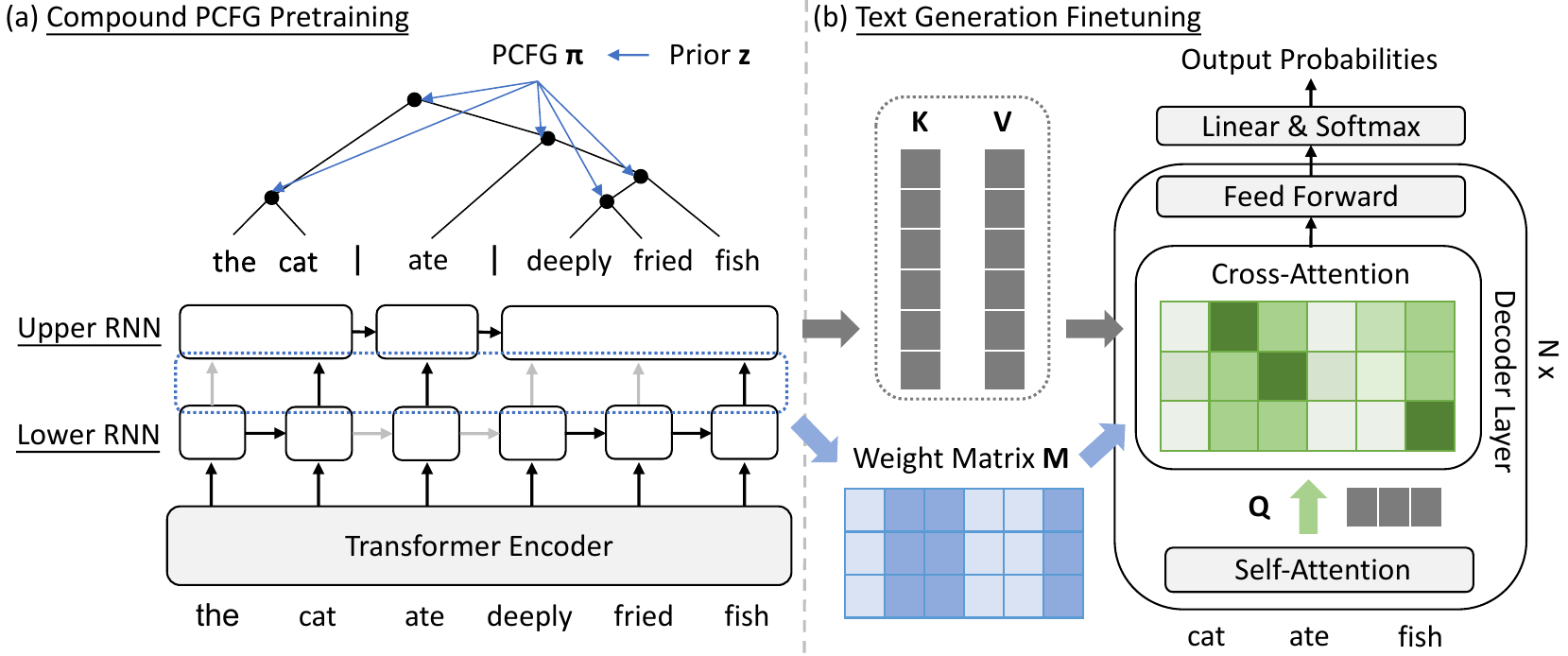}
\caption{An overview of our chunking induction method. (a) Pretraining HRNN using the chunk labels induced from the Compound PCFG parser. (b) HRNN is finetuned with text generation, specifically a summarization task in this example. A weight matrix is created from the switching gate's values and then inserted into the Transformer's encoder-decoder attention modules.}
\label{fig: intro to method}
\end{figure}

We model the chunking task in the BI schema~\citep{ramshaw-marcus-1995-text}, where “B” denotes the beginning of a chunk, and “I” denotes the inside of a chunk. We observe that encoder-only RNNs or Transformers may not be suitable for the chunking task because they lack autoregressiveness, which means that the prediction at a time step is unaware of previously predicted chunks. Feeding predicted chunk labels like a sequence-to-sequence model is also inadequate because a BI label only provides one-bit information and does not offer meaningful autoregressive information. 

To this end, we design a hierarchical RNN to model the autoregressiveness of predicted chunks. Our HRNN contains a lower-level RNN operating at the word level and an upper-level RNN operating at the chunk level. We design a gating mechanism that switches between the two RNNs in a soft manner, also serving as the chunk label. The bottom-left part of Figure~\ref{fig: intro to method} shows the structure of HRNN.

Given a sentence with words $\mathrm x^{(1)}, \cdots, \mathrm x^{(n)}$, we first apply a pretrained Transformer encoder~\citep{devlin-etal-2019-bert} to obtain the contextual representations of the words that help our model to understand the previous and future context of the sentence. We denote the contextual embeddings of these words by $\bm x^{(1)}, \cdots, \bm x^{(n)}$.  For a step $t$, we predict a switching gate $m^{(t)}\in (0,1)$ as the chunking decision.\footnote{Here, $m^{(t)} = 1$ corresponds to ``B,'' i.e.,  a new chunk,  and $m^{(t)}= 0$ corresponds to ``I,'' i.e., inside of a chunk.}
\begin{align}
    &m_\text{logit}^{(t)} = W \bigl[\underline{\bm{h}}^{(t-1)}; \overline{\bm{h}}^{(t-1)}; \bm{x}^{(t)} \bigr] \label{eqn: raw switching gate} \\
    &m^{(t)} = \sigma (m_\text{logit}^{(t)})
\end{align}
where $\underline{\bm{h}}^{(t-1)}$ is the hidden state of the lower RNN and $\overline{\bm{h}}^{(t-1)}$ is that of the upper RNN. The semicolon represents vector concatenation, and $\sigma$ represents the sigmoid function.

The switching gate is also used to control the information flow between the lower-level word RNN and the upper-level chunk RNN. In this way, it provides meaningful autoregressive information that enables HRNN to be aware of previously detected chunks.

Suppose our model predicts the $t$th word as the beginning of a chunk, which essentially “cuts” the sequence into two parts at this step. The lower RNN and upper RNN are updated by
\begin{align}
    &\underline{\bm{h}}^{(t)}_\text{cut} = \underline{f}\bigl(\bm{x}^{(t)}, \underline{\bm{h}}^\text{(sos)} \bigr) \label{eqn:lower cut update formula}\\
    &\overline{\bm{h}}^{(t)}_\text{cut} = \overline{f}\bigl(\underline{\bm{h}}^{(t-1)}, \overline{\bm{h}}^{(t-1)} \bigr) \label{eqn:upper cut update formula}
\end{align}
where $\underline{f}$ and $\overline{f}$ are the transition functions of the lower and upper RNNs, respectively, both employing the hyperbolic tangent (tanh) as their activation function.

In Equation~(\ref{eqn:lower cut update formula}), the lower RNN ignores its previous hidden state and starts anew from a learnable initial state $\underline{\bm{h}}^\text{(sos)}$, because a new chunk is predicted at this step. On the other hand, in Equation~(\ref{eqn:upper cut update formula}), the upper RNN picks the newly formed chunk $\underline{\bm{h}}^{(t-1)}$ captured by the lower RNN, and fuses it with the previous upper RNN's state $\overline{\bm{h}}^{(t-1)}$. 

Suppose our model predicts that the $t$th word is not the beginning of a chunk, i.e., “no cut” occurs at this step. The RNNs are then updated by
\begin{align}
    &\underline{\bm{h}}^{(t)}_\text{nocut} = \underline{f}\bigl(\bm{x}^{(t)}, \underline{\bm{h}}^{(t-1)} \bigr) \label{eqn:lower no cut update formula}\\
    &\overline{\bm{h}}^{(t)}_\text{nocut} = \overline{\bm{h}}^{(t-1)} \label{eqn:upper no cut update formula}
\end{align}
In this scenario, the lower RNN updates its hidden state with the input $\bm{x}^{(t)}$ in the same manner as a typical RNN, whereas the upper RNN remains idle because no chunk is formed.

The “cut” and “no cut” cases can be unified by
\begin{align}
    &\overline{\bm{h}}^{(t)} = m^{(t)} \overline{\bm{h}}^{(t)}_\text{cut} + (1-m^{(t)}) \overline{\bm{h}}^{(t)}_\text{nocut} \label{eqn:upper soft update formula}\\
    &\underline{\bm{h}}^{(t)} = m^{(t)} \underline{\bm{h}}^{(t)}_\text{cut} + (1-m^{(t)}) \underline{\bm{h}}^{(t)}_\text{nocut} \label{eqn:lower soft update formula}
\end{align}
In our work, we keep $m^{(t)}$ as a real number, which is essentially a gating mechanism that fuses “cut” and “no cut” in a soft manner. This is because chunking, by its nature, may be ambiguous, and our soft gating mechanism can better handle such ambiguity.

Our proposed HRNN supports end-to-end learning, which can be weakly supervised by downstream tasks. However, a randomly initialized HRNN may have difficulty modeling the desired chunking structure and suffer from the cold-start problem. We thus propose to pretrain HRNN by heuristically induced chunking labels from an unsupervised parser, which will be discussed in Section~\ref{sec: Warming-up Hierarchical RNN}.

\subsection{Pretraining HRNN by Unsupervised Parsing}\label{sec: Warming-up Hierarchical RNN}
We propose to induce chunk labels from an unsupervised parser. The intuition is that the chunking structure can be considered a flattened parse tree, thus sharing some commonalities with the parsing structure. Our approach is able to take advantage of recent advances in unsupervised parsing~\citep{Kim2020Are, kim-etal-2019-compound}.

Specifically, we adopt a state-of-the-art unsupervised parser, Compound Probabilistic Context-Free Grammar (PCFG; \citealp{kim-etal-2019-compound}). A PCFG is a tuple $\mathcal{G} = (\mathcal{S}, \mathcal{N}, \mathcal{P}, \Sigma, \mathcal{R})$, where $\mathcal{S}$ is a start symbol; $\mathcal{N}$, $\mathcal{P}$, and $\Sigma$ are finite sets of nonterminal, preterminal, and terminal symbols, respectively. $\mathcal{R}$ is a finite set of rules taking one of the following forms:
\begin{align}
    &\mathrm{S} \rightarrow \mathrm{A}    &\mathrm{A} \in \mathcal{N} \\
    &\mathrm{A} \rightarrow \mathrm{B}\ \mathrm{C} & \mathrm{B}, \mathrm{C} \in \mathcal{N} \cup \mathcal{P}\\
    &\mathrm{T} \rightarrow \mathrm{w}   &\mathrm{T} \in \mathcal{P}, \mathrm{w} \in \Sigma
\end{align}
where $\mathrm{S} \rightarrow \mathrm{A}$ is the start of a sentence and $\mathrm{T} \rightarrow \mathrm{w}$ indicates the generation of a word. $\mathrm{A} \rightarrow \mathrm{B}\ \mathrm{C}$ models the bifurcations of a binary constituency tree, where a constituent node is not explicitly associated with a type (e.g., noun phrase) in our setting.

In addition, the model maintains a continuous random vector at the sentence level, which serves as the prior of PCFG. To train Compound PCFG, the maximum likelihood of the text is utilized, and a Viterbi-like algorithm marginalizes the PCFG. Amortized variational inference is employed to handle the continuous distribution from the sentence-level vector. We refer readers to~\citet{kim-etal-2019-compound} for details.

We aim to induce chunk labels from the state-of-the-art unsupervised parser, Compound PCFG. Given a sentence, we obtain its parse tree by applying the Viterbi-like CYK algorithm~\citep{YOUNGER1967189,10.5555/555733} to Compound PCFG. Then, we propose a simple yet effective heuristic that extracts maximal left-branching subtrees as chunks. It is known that the English language has a strong bias toward right-branching structures~\citep{williams-etal-2018-latent, li-etal-2019-imitation}. We observe, on the other hand, that a left-branching structure often indicates words that are closely related. Here, a left-branching subtree means that the words are grouped in the form of $(( \cdots ( (\mathrm{x}_{i} \mathrm{x}_{i+1})\mathrm x_{i+2} )\cdots )\mathrm{x}_{i+n-1})$. A left-branching subtree for words $\mathrm{x}_i \cdots \mathrm{x}_{i+n-1}$ is maximal if neither $\mathrm{x}_{i-1}\mathrm x_i \cdots \mathrm{x}_{i+n-1}$ nor $\mathrm{x}_i \cdots \mathrm{x}_{i+n-1} \mathrm x_{i+n}$ is left-branching. We extract all maximal left-branching subtrees as chunks. Our heuristic can provide unambiguous chunk labels for any sentence with any parse tree, as demonstrated by the following theorem.

\begin{theorem}\label{thm:main}
Given any binary parse tree, every word will belong to one and only one chunk by the maximal left-branching heuristic.
\end{theorem}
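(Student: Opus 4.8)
The plan is to prove existence and uniqueness of the chunk assignment by structural induction on the parse tree, exploiting the fact that maximal left-branching subtrees are defined by a purely local condition on tree nodes. First I would reformulate the heuristic in terms of tree nodes rather than word spans: a node $v$ (spanning words $\mathrm{x}_i \cdots \mathrm{x}_{j}$) is the root of a maximal left-branching subtree iff (a) the subtree rooted at $v$ is left-branching, and (b) $v$ is not the left child of its parent, or $v$ has no parent, or the parent's subtree is not left-branching. Since the parse tree is binary, I would use the characterization that a subtree is left-branching iff every internal node on its "right spine" — wait, more precisely, iff at every internal node of the subtree, the \emph{right} child is a leaf. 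The set of maximal left-branching subtrees then partitions the leaves, which is exactly the claim.

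For \textbf{existence} (every word is in at least one chunk), I would argue that every leaf $\mathrm{x}_k$ is contained in \emph{some} maximal left-branching subtree: starting from the leaf itself (a trivial one-node left-branching subtree), repeatedly pass to the parent as long as the enlarged subtree is still left-branching; this process terminates (the tree is finite) and halts exactly at a node satisfying the maximality condition (b). For \textbf{uniqueness} (at most one chunk), suppose $\mathrm{x}_k$ lies in two distinct maximal left-branching subtrees with roots $u \neq v$. Both $u$ and $v$ are ancestors of the leaf $\mathrm{x}_k$ (including possibly $\mathrm{x}_k$ itself), so they are comparable in the ancestor order — say $u$ is a proper ancestor of $v$. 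The key step is to show that $v$ must then be the left child of some node on the path from $v$ up to $u$, and that node's subtree is left-branching (being a subtree of $u$'s left-branching subtree), contradicting maximality of the subtree rooted at $v$. This uses the observation that in a left-branching subtree, every node except possibly along the left spine is a leaf, so an internal descendant $v$ of $u$ with more than one leaf must sit on $u$'s left spine, hence be a left child with a left-branching parent-subtree.

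The \textbf{main obstacle} I anticipate is making the "left spine" geometry precise and airtight: I need the lemma that if a subtree rooted at $u$ is left-branching, then for any internal node $v$ strictly below $u$, $v$ lies on the left spine of $u$ (i.e., $v$ is reached from $u$ by taking only left-child steps), and consequently the subtree rooted at $v$'s parent is also left-branching. Establishing this lemma cleanly — probably by induction on the depth of $v$ below $u$, using that the right child at each internal node of a left-branching subtree is a leaf and hence not an internal node — is the technical crux; once it is in hand, both existence and uniqueness follow quickly. A secondary care point is the boundary case where a "chunk" is a single word (a left-branching subtree with one leaf): the maximality condition and the induction must handle the degenerate subtree with no internal nodes, which I would dispatch by treating single leaves as vacuously left-branching at the start of the existence argument.
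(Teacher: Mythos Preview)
Your approach is more elaborate than needed, and it contains a genuine gap stemming from an incorrect node-level reformulation of maximality. Your condition (b) declares $v$ maximal whenever $v$ is not a left child, but this is wrong: in the fully left-branching tree $(((x_1\,x_2)\,x_3)\,x_4)$ take $v=x_3$. Here $v$ is a right child, so your (b) is satisfied and $\{x_3\}$ is ``maximal'' by your criterion, yet it sits inside the larger left-branching subtree rooted at the whole tree---so $x_3$ would belong to two chunks, and your uniqueness argument cannot close this case (your left-spine lemma applies only when $v$ is internal, and your ``secondary care point'' for single leaves is addressed only on the existence side, not for uniqueness). The correct local characterization is simply that $v$'s subtree is left-branching and $v$'s parent, if any, does \emph{not} root a left-branching subtree; the ``not a left child'' disjunct should be dropped entirely.

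With that fix your spine machinery becomes unnecessary and the argument collapses to essentially the paper's two-line proof. The paper observes that any two subtrees of a tree sharing a leaf must be nested (otherwise there would be two distinct root-to-leaf paths, violating acyclicity), and that the smaller of two nested left-branching subtrees cannot be maximal. No structural induction and no left-spine lemma are required---the whole thing rests on the elementary fact that rooted subtrees of a tree are either disjoint or comparable by inclusion.
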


\begin{proof}

[Existence] A single word itself is a left-branching subtree, which belongs to some maximal left-branching subtree.

[Uniqueness] We will show that two different maximal left-branching subtrees $\textrm s_1$ and $\textrm s_2$ cannot
overlap. Assume by way of contradiction that there exists a word $\textrm x_i$ in both $\textrm s_1$ and $\textrm s_2$. Then, $\textrm s_1$ must be a substructure of $\textrm s_2$ or vice versa; otherwise, the paths,  $\textrm{root} \text{–} \textrm{s}_1 \text{–} \textrm{x}_i$ and $\textrm{root} \text{–} \textrm{s}_2 \text{–} \textrm{x}_i$, violate the acyclic nature of a tree. But $\textrm s_1$ being a subtree of $\textrm s_2$ (or vice versa) contradicts the maximality of $\textrm s_1$ and $\textrm s_2$.
\end{proof}

The theorem shows that our maximal left-branching heuristic can unambiguously assign chunk labels based on a binary parse tree of any sentence.

We induce the labels by our heuristic and pretrain HRNN on those labels. While the heuristic is simple and may not be perfect, it still achieves satisfactory chunking results. Let $T$ be the length of a sentence. Our pretraining objective is to minimize the following loss function for a sample:
\begin{align}\label{eqn: pretrain loss}
    \mathcal{L}_\text{pretrain} = - \sum_{t=1}^{T} \left[\widetilde{y}_{t} \cdot \log m^{(t)}
    + (1-\widetilde{y}_{t}) \cdot \log(1 - m^{(t)}) \right]
\end{align}
where $\widetilde{y}_{t}$ is the induced chunk label for the $t$th step, and $m^{(t)}$ is the chunking decision predicted by HRNN. As a machine learning model, the HRNN is able to smooth out the noise from the heuristic and yield more meaningful chunks, shown in Section~\ref{sec: Main Results}.

\subsection{Finetuning Hierarchical RNN with Downstream Tasks}
\label{sec: Finetuning Hierarchical RNN with Downstream Tasks}
We propose to finetune the HRNN in a weakly supervised manner to learn better chunking structures from a downstream task. Our intuition is that accomplishing a downstream NLP task requires understanding meaningful semantic units/chunks of a sentence~\citep{wu2023weakly}, and that optimizing the downstream task may benefit the chunk prediction.

Specifically, we consider text generation tasks, namely, summarization~\cite{rush-etal-2015-neural, liu-etal-2022-learning, liu2019fine}, paraphrasing~\cite{liu-etal-2020-unsupervised, li-etal-2019-decomposable}, and translation~\cite{bojar-etal-2014-findings, liu-etal-2020-multilingual-denoising}. Although the HRNN may be finetuned with classification tasks as well, we found in our preliminary experiments that classification tasks yield marginal improvement, probably because the classification training signal is too sparse (one label per sample) in comparison to generation tasks. Therefore, we only consider finetuning HRNN with text generation tasks in this work.

We propose a decoder network, whose attention is based on the upper-level RNN (chunk representations). This is accomplished by performing conventional attention over the RNN's states but reweighing it by whether a chunk is predicted at the step.
Since the HRNN's gate is differentiable, we can train it end to end with the downstream tasks.

We first argue that the chunk representation is indeed given by the upper-level RNN where a ``cut'' is predicted.
Suppose we have a chunk $\mathrm x^{(i)} \cdots \mathrm x^{(i+k)}$, i.e., HRNN predicts two consecutive ``cuts'' (end of a chunk) at the time steps $i-1$ and $i+k$. According to our design (Section~\ref{sec: Hierarchical RNN}), the lower-level RNN initializes its hidden state with $\underline{\bm{h}}^{\text{(sos)}}$ when $t = i-1$, and updates its hidden state $\underline{\bm{h}}^{(t)}$ with the input words $\mathrm x^{(i)} \cdots \mathrm x^{(i+k)}$. In the meantime, the upper-level RNN is idle for time steps $i$ to $i+k-1$, but picks up the $\underline{\bm{h}}^{(i+k)}$ at step $i+k$. This follows
a conventional RNN, demonstrating that the chunk representation is given by the upper-level RNN at a ``cut'' step.

Now consider the scenarios with ``soft'' chunks, since we keep the chunking decision $m^{(t)}$ as a real number in our HRNN. When the ``cut'' strength is high ($m^{(t)}$ close to 1), the step is more likely to be a chunk, and we would like to attend to this step more from the decoder. Conversely, if the  ``cut'' strength is low, it should be less attended. This can be achieved by applying conventional encoder--decoder attention to every step, but reweighing it by the ``cut'' strength. For each attention head, we form a query vector $\bm{q}^{(j)}$ for the $j$th decoder step and a key vector $\bm{k}^{(i)}$ for the $i$th encoder step. Here, the key vector $\bm{k}^{(i)}$ is given by the upper-level RNN's state $\overline{\bm h}^{(i)}$.
We reweigh the attention by first computing an unnormalized measure:
\begin{align} \label{eqn: weighed attention}
\widetilde\alpha \Bigl( \bm{q}^{(j)}, \bm{k}^{(i)}, m_\text{logit}^{(i)} \Bigr) 
= \operatorname{exp}{\Bigl( \frac{(\bm{q}^{(i)})^\top \bm{k}^{(j)}}{\sqrt{d}}+\gamma m_\text{logit}^{(i)} \Bigr)}
\end{align}
where $\gamma$ is a coefficient hyperparameter. Then, we normalize it to attention probabilities as 
\begin{align} 
\alpha^{(j, i)}=\frac{\widetilde{\alpha} \Bigl(\bm{q}^{(j)}, \bm{k}^{(i)},m_\text{logit}^{(i)} \Bigr)}{\sum_{i'}\widetilde\alpha\Bigl(\bm{q}^{(j)}, \bm{k}^{(i')}, m_\text{logit}^{(i')} \Bigr)}
\end{align}

Empirically, we find that an end-to-end reweighted attention head tends to ignore $m_\text{logit}^{(i)}$ and make it close to 0. Consequently, the gating probability $m^{(i)}$ will be close to 0.5 unanimously, and the HRNN will be in a situation of neither ``cut'' nor ``no cut.'' We alleviate this by proposing an auxiliary training objective that pushes $m^{(i)}$ to either~0 or~1. We introduce a chunking strength hyperparameter $\kappa\%$, which is the desired chunk--token ratio. We then define the loss at the $i$th encoder step as
\begin{align}
L^{(i)}=
\begin{cases}
    (m^{(i)}-1)^2 , & \text{if $m^{(i)}$ is among the top-$\kappa$\% gate values,} \\
    (m^{(i)}-0)^2, & \text{otherwise.}
\end{cases}
\label{eqn: aux loss}
\end{align}
A detailed analysis of this hyperparameter is presented in Section~\ref{sec: analysis of finetuning}, and the value of $\kappa\%$ is decided by validation performance.

The overall auxiliary loss is $L_\text{aux}=\sum_i L^{(i)}$, and is combined with the end-to-end downstream task by $L_\text{task} + \eta L_\text{aux}$, where $\eta$ is a hyperparameter balancing the two training objectives. In this way, we are able to learn a meaningful chunking gate in the downtream NLP task.

\section{Experiments}

\begin{table*}[!t]
\caption{Details on the dataset and implementation. Symbol $\eta$ represents the balance between two training objectives, while $\gamma$ is the hyper-parameter for the reweight operation, as defined in Equation~\ref{eqn: weighed attention}. The term top-$\kappa$ denotes the chunking strength heuristics.}
\label{tab: dataset and implementation}
\resizebox{\linewidth}{!}{
\begin{tabular}{l | l | c c c | c c c c c c c} 
\toprule
 Task & Dataset & $\#$Train & $\#$Dev & $\#$Test & Batch size & Learning rate & $\eta$ & $\gamma$ & top-$\kappa$\% \\
\midrule
\multirow{2}{*}{Chunking} & CoNLL-2000 & 8K & 1K & 2K & 32 & 5e-5 & -- & -- & -- \\
& English Web Treebank & -- & 1.9K & 1K & -- & -- & -- & -- & -- \\
\midrule
\multirow{3}{*}{Downstream} & Gigaword & 3.8M & 2K &2K &  32 & 5e-5 & 0.1 & 0.1 & 50$\%$ \\
& MNLI-Gen & 131K & 3.5K & 3.5K & 32 & 4e-5 & 0.1 & 0.1 & 60$\%$ \\
& WMT-14 (En-De) & 4.5M & 3K & 2.7K & 32 & 1e-5 & 0.1 & 0.1 & 50$\%$ \\
\bottomrule
\end{tabular}}
\end{table*}

\subsection{Datasets and Metrics}
Table~\ref{tab: dataset and implementation} summarizes the datasets used in our study. We adopted a widely used chunking dataset, CoNLL-2000~\citep{sang2000introduction}, to evaluate the general chunking performance of a model. CoNLL-2000 contains 8K training, 1K validation, and 2K test samples. Each sample is labeled with the BIO schema, where ``B'' indicates the beginning of a chunk, ``I'' indicates inside a chunk, and ``O'' indicates outside a chunk (mainly punctuation). We used the BI schema and ignore the ``O'' tokens.

In this work, however, we addressed chunking in the unsupervised setting, and thus we did not use the groundtruth chunk labels in the training set, but only the training sentences. Groundtruth chunk labels are used for validation\footnote{We will discuss the use of labeled validation dataset in Section~\ref{section: Effect of the Size of Validation Data on Unsupervised Performance}.} and test purposes. 

Further, we used the English Web Treebank~\citep{annwebtreebank} to evaluate the general chunking performance on the online review domain, following \citet{deshmukh-etal-2021-unsupervised-chunking}. The dataset does not contain linguistically annotated chunk labels, and therefore, we resorted to the NLTK toolkit~\cite{bird-loper-2004-nltk} to obtain pseudo-groundtruth chunk labels.

We finetuned the HRNN model with several downstream text generation tasks, whose datasets are introduced as follows.
\begin{itemize}
    \item Gigaword~\citep{rush-etal-2015-neural}: A widely used article--headline dataset for text summarization with about 3.8M training, 20K validation, and 2K test samples. We randomly picked 2K samples (same size as the test set) for more efficient validation. 
    \item MNLI-Gen: MNLI~\citep{williams-etal-2018-broad} is a massive dataset for the natural language inference task. We used the entailment subset, generating the entailed hypothesis based on the premise. Since the test set of MNLI is private, we adopted the matched section of the original validation set as our test set, while the mismatched section served as our validation set. The resulting entailment generation dataset contains 131K training, 3.5K validation, and another 3.5K test samples.
    \item WMT-14~\citep{bojar-etal-2014-findings}: A multilingual dataset used for machine translation. We used the English-to-German subset, having nearly 4.5M training, 3K validation, and 2.7K test samples.
\end{itemize}
In addition to the general chunking performance on the CoNLL-2000 and English Web Treebank corpora, we are interested in in-domain chunking performance after finetuning with a downstream task, whose test set, unfortunately, does not have human annotation of chunking labels. Therefore, we also used NLTK~\citep{bird-loper-2004-nltk} to produce pseudo-groundtruth labels for the source sentences of the downstream datasets.

Regarding the evaluation metrics, we adopted the standard phrase F1 score and tagging accuracy. In our implementation, they are realized by the CoNLL-2000 evaluation script~\citep{sang2000introduction}. 

\subsection{Implementation Details}
We utilized an encoder-only BERT model and two encoder--decoder models, BART and mBART, in our experiments. For the BERT model~\cite{devlin-etal-2019-bert}, we adopted its base version that has $12$ Transformer layers with $768$ hidden dimensions, and its total parameter number is $110$ million. For the BART model~\cite{lewis-etal-2020-bart}, we also used its base version, which has $6$ encoder layers and another $6$ decoder layers. It has 768 dimensions and 139 million parameters in total. The mBART model~\cite{liu-etal-2020-multilingual-denoising} has 610 million parameters, both the encoder and decoder having $12$ layers and  $1024$ dimensions. We only used the encoder module of BART or mBART for the chunking pretraining, which is then combined with the respective pretrained decoder for finetuning on downstream tasks. Our HRNN has the same hidden dimension as the underlying Transformer models.

We used the Adam optimizer~\citep{kingma2014adam} for both pretraining and finetuning. The optimizer hyperparameters were set as $\beta_1=0.9$ and$\beta_2=0.999$. We applied a learning rate warm-up for the first $10$ percent of total steps, followed by a linear decay. The detailed hyperparameters for both stages can also be found in Table~\ref{tab: dataset and implementation}. 

The pretraining phase generally concludes within three epochs, whereas the finetuning process requires less than one. Although one-epoch finetuning may appear unreasonable at first glance, we observed that Transformers are very powerful models, which may accomplish a downstream task while bypassing syntactic information. To let meaningful syntax emerge in the downstream task, we have to consider early stopping and only very few iterations are needed. Details will be presented in Section~\ref{sec: analysis of finetuning}.

\subsection{Main Results}
\label{sec: Main Results}

\begin{table*}[!t]
\centering
\caption{Main results on the linguistic annotated data.}
\label{tab: main result ling}
\resizebox{0.8\linewidth}{!}{
\begin{tabular}{l l  l l  l l } 
\toprule
  \multirow{2}{*}{\#} &\multirow{2}{*}{Method} & \multicolumn{2}{c}{\textbf{CoNLL-2000}} & \multicolumn{2}{c}{\textbf{English Web Treebank}} \\
 & & Phrase F1 & Tag Acc. & Phrase F1 & Tag Acc. \\
 \midrule
 \multicolumn{6}{l}{\textbf{Supervised Methods}} \\
 1 & NLTK-tagger-chunker & 83.71 & 89.51 & -- & -- \\
 2 & Supervised HMM & 87.68 & 93.99 & 98.62 & 99.44\\
 \midrule
 \multicolumn{6}{l}{\textbf{Unsupervised Methods}} \\
 3 & PMI Chunker & 35.64 & 64.50 & 32.28 & 65.34 \\
 4 & Baum–Welch HMM & 25.04 & 58.93 & 24.17 & 58.02 \\
 5 & LM Chunker & 42.05 & 68.74 & 31.23 & 62.55\\
 6 & Compound PCFG Chunker & 62.89 & 81.64 & 58.17 & 79.33\\
 7 & Llama2-7B Prompting & 48.12 & 73.28 &  43.79 & 64.34   \\
 \midrule
 \multicolumn{6}{l}{\textbf{Our Pretrain Methods}} \\
 8 & HRNN w/ BERT & 68.12 & 83.90 & 64.32 & 83.25\\
 9 & HRNN w/ BART & 68.57 & 84.04 & 69.71 & 83.08 \\
 10 & HRNN w/ mBART & 68.65 & 84.42 & 67.53 & 82.18 \\
 \midrule
 \multicolumn{6}{l}{\textbf{Our Finetune Methods}} \\
 11 & HRNN w/ BART (Gigaword) & \textbf{70.83} & \textbf{85.16} & \textbf{72.05} & \textbf{84.16} \\
 12 & HRNN w/ BART (MNLI-Gen) & 69.68 & 84.72 & 70.47 & 83.12\\
 13 & HRNN w/ mBART (WMT-14) & 69.77 & 84.69 & 69.57 & 83.06 \\
 
\bottomrule
\end{tabular}}
\end{table*}

\begin{table*}[!t]
\caption{Main results of the chunking performance on the downstream datasets. The numbers with \textsuperscript{\textdagger} indicate that the model is tested on the same datasets as the finetuning tasks.}
\label{tab: main result downstream}
\resizebox{\linewidth}{!}{
\begin{tabular}{l l  l l   l l  l l } 
\toprule
  \multirow{2}{*}{\#} &\multirow{2}{*}{Method} & \multicolumn{2}{c}{\textbf{Gigaword}} & \multicolumn{2}{c}{\textbf{MNLI-Gen}} & \multicolumn{2}{c}{\textbf{WMT-14}} \\
 & & Phrase F1 & Tag Acc. & Phrase F1 & Tag Acc. & Phrase F1 & Tag Acc. \\
 \midrule
 \multicolumn{8}{l}{\textbf{Supervised Methods}} \\
 1 & Supervised HMM & 87.54 & 94.37 & 86.81 & 93.40 & 86.49 & 93.50 \\
 \midrule
 \multicolumn{8}{l}{\textbf{Unsupervised Methods}} \\
 2 & PMI Chunker  & 37.44 & 65.18 & 44.37 & 67.14 & 45.70 & 67.20\\
 3 & Baum–Welch HMM  & 38.35 & 54.75 & 29.87 & 32.06 & 34.87 & 45.00\\
 4 & LM Chunker & 29.31 & 51.34 & 25.85 & 49.57 & 32.84 & 50.72 \\
 5 & Compound PCFG Chunker  & 25.69 & 51.70 & 32.96 & 55.73 & 52.16 & 73.73 \\
 6 & Llama2-7B Prompting & 34.38 & 65.72 & 38.59 &  60.91 & 43.31 & 66.56 \\
 \midrule
 \multicolumn{8}{l}{\textbf{Our Pretrain Methods}} \\
 7 & HRNN w/ BERT & 64.20 & 82.61 & 66.38 & 81.60 & 67.27 & 82.70 \\
 8& HRNN w/ BART &  64.60 & 82.24 & 68.81 & 82.04 & 72.09 & 84.87 \\
 9 & HRNN w/ mBART & 67.08 & 83.49 & 67.25 & 81.35 & 71.24 & 84.55 \\
 \midrule
 \multicolumn{8}{l}{\textbf{Our Finetune Methods}} \\
 10 & HRNN w/ BART (Gigaword) & \textbf{70.66}\textsuperscript{\textdagger} & \textbf{85.15}\textsuperscript{\textdagger} & \textbf{73.96} & \textbf{84.93} & \textbf{75.87}  & \textbf{86.66} \\
 11 & HRNN w/ BART (MNLI-Gen) & 67.15 & 83.36  & 70.79\textsuperscript{\textdagger} & 83.11\textsuperscript{\textdagger} & 73.26 & 85.43 \\
 12 & HRNN w/ mBART (WMT-14) & 67.91 & 83.58 & 69.93 & 82.75 & 71.85\textsuperscript{\textdagger} & 84.69\textsuperscript{\textdagger} \\
 
\bottomrule
\end{tabular}}
\end{table*}

Table~\ref{tab: main result ling} presents the main results of general chunking performance. In the experiment, we followed~\citep{deshmukh-etal-2021-unsupervised-chunking} and trained all models on CoNLL-2000; testing was performed on CoNLL-2000 as well as the Englisht Web Treebank.

For comparison of inducing chunks from unsupervised parsing, we include Compound PCFG (discussed in Section~\ref{sec: Warming-up Hierarchical RNN}) and another recent unsupervised parser based on the features of a pretrained language model proposed by \citet{Kim2020Are}. The latter, called an LM Chunker, thresholds the BERT~\citep{devlin-etal-2019-bert} similarity of consecutive words for chunking. We observe from Lines~5 and~6 in Table~\ref{tab: main result ling} that the LM-based unsupervised chunker is worse than the Compound PCFG in phrase F1 (42.05 vs.~62.89 on CoNLL-2000; 31.23 vs.~58.17 on English Web Treebank) and tag accuracy (68.74 vs.~81.64 on CoNLL-2000; 62.55 vs.~79.33 on English Web Treebank). Therefore, we use the chunking labels induced by Compound PCFG to pretrain our HRNN model in all experiments. 

We include several traditional unsupervised chunking methods as baselines: a pointwise mutual information (PMI) chunker~\citep{van-de-cruys-2011-two} cuts two consecutive words if their PMI score is below a threshold; Baum--Welch HMM~\citep{baggenstoss2001modified} applies expectation--maximization to a hidden Markov model (HMM, \citealp{18626}). These methods perform significantly worse than the recent advances in unsupervised syntactic structure discovery. 

However, we notice that both Compound PCFG and LM chunkers (Lines 4--5 in Table~\ref{tab: main result downstream}) show a significant decrease in performance when transferring to other datasets. For example, when the two chunkers are transferred from CoNLL-2000 to Gigaword, their tag accuracy is dropped by 25 percent and 37 percent, respectively. 
By contrast, the PMI chunker has a 7 percent drop, while the Baum--Welch HMM even has a 1 percent increase (Lines 3--4). The results indicate that traditional unsupervised chunking methods are more stable, although it achieves lower performance on CoNLL-2000. Overall, unsupervised chunkers may suffer from the problem of poor transferability in addition to less satisfactory chunking performance in general.

We experimented with a baseline by prompting a large language model (LLM). Specifically, we prompted Llama2-7B~\citep{touvron2023llama} to generate the chunked sentences (separated by ``|'').\footnote{Our prompt is:
\texttt{<s>[INST] <<SYS>>\textbackslash n You’re a helpful assistant who is an expert in linguistics. The user will ask you to separate a given sentence into phrase chunks using ‘|'. In NLP, chunking refers to the process of segmenting and labeling multi-token sequences, such as phrases or sub-sentences. For example, for the sentence ‘The cat ate deeply fried fish’, the desired answer is ‘The cat | ate | deeply fried fish’. The user wants you to directly give the answer without explanation. \textbackslash n<</SYS>>\textbackslash n \textbackslash n Please separate the sentence '\%s' into phrases. [/INST]}} Notice that our prompt has been reasonably well engineered with role-playing and exemplar techniques, and that an LLM may not be fully unsupervised because it is likely to be pretrained with linguistically annotated corpora (such as PTB). That being said, its performance (Line~7 in Table~\ref{tab: main result ling} and Line~6 in Table~\ref{tab: main result downstream}) is less satisfactory and lower than Compound PCFG. The results suggest that, although LLMs are good at generating fluent text, it remains a challenge to discover linguistic structures with LLMs.

Regarding our method, we pretrained HRNN with several Transformer encoders: 
\begin{enumerate}[label=\alph*)]
    \item BERT~\cite{devlin-etal-2019-bert}, an encoder-only Transformer model used in our preliminary conference paper~\cite{deshmukh-etal-2021-unsupervised-chunking}.
    \item BART~\cite{lewis-etal-2020-bart}, chosen due to its pretrained decoder, which allows HRNN to be finetuned on Gigaword and MNLI-Gen datasets for generation tasks. 
    \item mBART~\cite{liu-etal-2020-multilingual-denoising}, a multilingual variant of BART, which is needed for the translation task.
\end{enumerate}

We see from Lines 8--10 in Table~\ref{tab: main result ling} that all the pretrained HRNN variants achieve an improvement of more than $5$ percentage points in phrase F1 based on the Compound PCFG chunker on the CoNLL-2000 dataset. BART-based and mBART-based HRNN is better than the BERT-based variant not only on the standard CoNLL-2000 dataset but also in transferring its learned chunking knowledge to three downstream datasets. The large margins between the Compound PCFG chunker and our pretrained HRNN model imply that HRNN can indeed smooth out the noise of heuristics while capturing meaningful chunking patterns.\footnote{It is understandable that a machine learning system may smooth out the noise of training labels. For example, our HRNN achieves an F1 score of 96.67\% in a supervised setting; if we randomly flip 40\% of the labels, the F1 only decreases to 91.52\%. Such a smoothing effect is also observed in other domains, such as text generation~\cite{NEURIPS2020_7a677bb4, wen2024ebbs} and syntactic parsing~\cite{li-etal-2019-imitation, shayegh2024ensemble}.} More importantly, our method drastically improves the transferability of unsupervised chunkers (Lines 7--9 vs.~Lines 2--5) in Table~\ref{tab: main result downstream}.

We then finetuned the HRNN on some downstream task for syntactic structure discovery. 
The results are shown in Table~\ref{tab: main result downstream}. 
We observe an improved chunking performance across all datasets, including Gigaword, MNLI-Gen, and WMT-14. As shown in Lines 11--12 (Table~\ref{tab: main result downstream}),  finetuning HRNN with MNLI-Gen and WMT-14 leads to a 1--2 percent increase in phrase F1 on the in-domain datasets (70.79 vs.~68.81; 71.85 vs.~71.24). Futhermore, as seen in Table~\ref{tab: main result ling}, it leads to improvement in the general CoNLL-2000 corpus (69.68 vs.~68.57; 69.77 vs.~68.65) and English Web Treebank corpus (70.47 vs.~69.71; 69.57 vs.~67.53).

Notably, finetuning with the summarization task yields the most significant improvements, as indicated in Line 11 in Table~\ref{tab: main result ling} and Line 10 in Table~\ref{tab: main result downstream}. It achieves nearly an increase of $6$ percentage points in phrase F1 on the in-domain Gigaword dataset (70.66 vs.~64.60), as well as an increase of $5$ and $4$ percentage points on the out-of-domain MNLI-Gen and WMT-14 datasets (73.96 vs.~68.81; 75.87 vs.~72.09). Furthermore, it brings another 2.2-point improvement to the CoNLL-2000 dataset. The superiority of chunk induction from the summarization task may be due to the inherent connection between summarization and chunking. This is in line with a summarization method known as phrasal extractive summarization~\citep{muresan-etal-2001-combining, xu-zhao-2022-jointly}, where the salient information (often key phrases) is extracted to generate summary text. Future work may further analyze the connection between linguistic structures and real-world tasks, perhaps with linguistic probing methods.

It should be emphasized that our findings are different from the those in the multi-task transfer learning literature. For example, \citet{phang2018sentence} suggest that pretraining a model with intermediate tasks can help the downstream task, but both are in the supervised regime; more interestingly, \citet{phang-etal-2020-english} demonstrate that such transferability holds cross different languages in a multilingual model. 
However, our chunker is never trained with linguistic annotations, but the meaningful linguistic structure emerges with the supervision signal of the downstream task only.

\subsection{Detailed Analyses} \label{sec: analysis}
In this subsection, we conduct comprehensive experiments to verify our proposed approach. We will first show an analysis of the proposed heuristic for chunk label induction from unsupervised parsers for pretraining HRNN. Then, we will show an ablation study on the proposed HRNN structure. Finally, we will analyze the effectiveness of finetuning HRNN on downstream text generation tasks.

\begin{table*}[t]
    \caption{Analysis of the heuristics for inducing chunking labels from Compound PCFG.}
    \label{tab: chunking heuristics}
    \centering
    \resizebox{0.56\linewidth}{!}{
    \begin{tabular}{l c c}
    \toprule
    \textbf{Chunking Heuristics} & \textbf{Phrase F1} & \textbf{Tag Acc.} \\
    \midrule
    1-word \& 2-word chunks & 55.72 & 75.14 \\
    Maximal right branching & 40.83 & 69.28 \\
    Maximal left branching & \textbf{62.89} & \textbf{81.64} \\
    \bottomrule
    \end{tabular}}
\end{table*}

\begin{table*}[t]
    \caption{Ablation study of the HRNN model.}
    \label{tab: HRNN ablation}
    \centering
    \resizebox{0.6\linewidth}{!}{
    \begin{tabular}{c l c c}
    \toprule
    \textbf{\#} & \textbf{Method} & \textbf{Phrase F1} &  \textbf{Tag Acc.} \\
    \midrule
    1 & Compound PCFG chunker & 62.89 & 81.64 \\
    \hdashline
    2 & $\longrightarrow$ 1-layer RNN & 63.40 & 81.70 \\
    3 & $\longrightarrow$ 2-layer RNN & 64.59 & 82.32 \\
    4 & $\longrightarrow$ HRNN only & 65.01 & 82.22 \\
    \hdashline
    5 & $\longrightarrow$ BERT tagger & 66.67 & 83.34 \\
    6 & $\longrightarrow$ BERT+1-layer RNN & 67.19 & 83.86 \\
    7 & $\longrightarrow$ BERT+2-layer RNN & 66.53 & 83.34 \\
    8 & $\longrightarrow$ BERT+HRNN (hard) & 67.90 & 83.80 \\
    9 & $\longrightarrow$ BERT+HRNN & \textbf{68.12} & \textbf{83.90} \\
    \bottomrule
    \end{tabular}}
\end{table*}

\subsubsection{Analysis of the Left-Branching Chunking Heuristic}
\label{sec: Left-branching Chunking Heuristic}
We provide a detailed analysis of our maximal left-branching chunking heuristic, where we used the CoNLL-2000 dataset as a testbed because we pretrained all HRNN variants on CoNLL-2000. Without loss of generality, we conducted this analysis with  BERT-based HRNN. 

Table~\ref{tab: chunking heuristics} shows a comparison between different heuristics for inducing chunks from parse trees. We observe that our maximal left-branching heuristic outperforms right branching by 20 points in phrase F1. Additionally, we introduce a thresholding approach that only extracts one-word and two-word chunks since most of the groundtruth chunks contain one or two words. The performance of such a heuristic is better than right-branching but worse than our left-branching approach. The findings support our hypothesis that right-branching is a common structure in English and does not suggest meaningful chunks. Conversely, left-branching identifies closely related words and is an effective heuristic for inducing chunks from parse trees.

\subsubsection{Ablation Study on the HRNN Architecture} \label{sec: Ablation Study on the HRNN Architecture}
Table~\ref{tab: HRNN ablation} presents an ablation study on the HRNN model. Here, we only consider pretraining HRNN from unsupervised parsers, while excluding the finetuning phase on downstream tasks. This allows us to eliminate distracting factors in this ablation study.

We experimented with various neural architectures for the chunker, which learns from Compound PCFG with the maximal left-branching heuristic. We see all variants outperform the Compound PCFG chunker (Lines 2--9 vs.~Line 1 in Table~\ref{tab: HRNN ablation}), demonstrating that machine learning models can effectively smooth out noise and mitigate the imperfection of chunk heuristics.

Among different neural models, we see our HRNN outperforms traditional 1-layer and 2-layer RNNs (Lines~2--4), suggesting that the HRNN's switching gate provides useful autoregressive information during the chunking process. Moreover, the underlying BERT model provides valuable contextual information and can largely boost the performance when the number of RNN layers is controlled (Lines~2--3 vs.~6--7). Overall, our HRNN with BERT yields the highest performance, justifying our architecture design.

\begin{figure}[!t]
\includegraphics[width=0.95\linewidth]{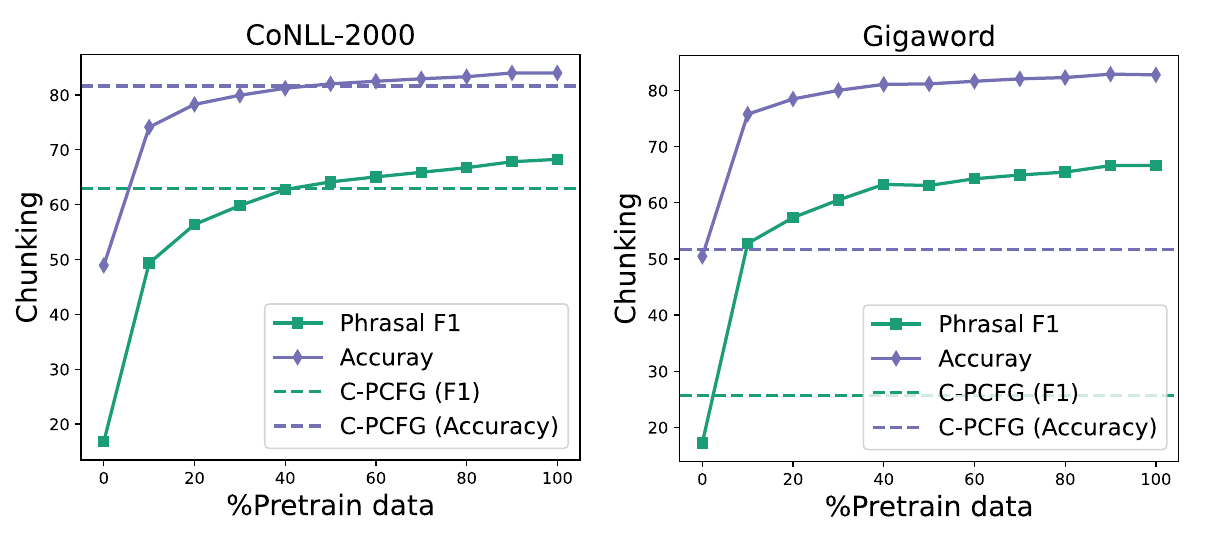}
\caption{Analysis of the size of the pretraining dataset.} 
\label{fig: ablate pertaining data curve}
\end{figure}

\subsubsection{Analysis of the Size of Pretraining Data}
We investigate the effect of the dataset size for pretraining our HRNN. We report the learning curves on two datasets: 1) CoNLL-2000, which is in the same domain as the pretraining data but with a different data split (test vs.~train), and 2) Gigaword, which is considered an out-of-domain corpus in the pretraining stage. We incrementally increased the percentage of pretraining data from 0\% to 100\%, and the resulting pretrained HRNN performance is shown in Figure~\ref{fig: ablate pertaining data curve}. They are compared with the chunking performance achieved by the Compound PCFG chunker (dashed lines), which is the source of our pretraining data.

For both CoNLL-2000 and Gigaword datasets, the initial increase in pretraining data size leads to a significant boost in the model's chunking performance, but the improvement slows down when the pretraining dataset is large. It is worth noting that the HRNN outperforms the Compound PCFG on Gigaword,  even with a small pretraining dataset (\textasciitilde 10\%). This highlights that, compared with heuristic chunk induction, our approach is able to improve the generalizability of unsupervised chunking.

\subsubsection{Analysis of Finetuning}
\label{sec: analysis of finetuning}
Chunking structure discovery with finetuning HRNN on downstream tasks requires several treatments: 1) We need to pretrain the HRNN, whose architecture has been analyzed in Section~\ref{sec: Ablation Study on the HRNN Architecture}, and 2) We need to apply an auxiliary loss push HRNN's predicted chunking probability away from $0.5$, as presented in Equation~\eqref{eqn: aux loss}. We analyze the effect of pretraining and the auxiliary loss in this part.

\begin{figure}[!t]
    \includegraphics[width=0.95\linewidth]{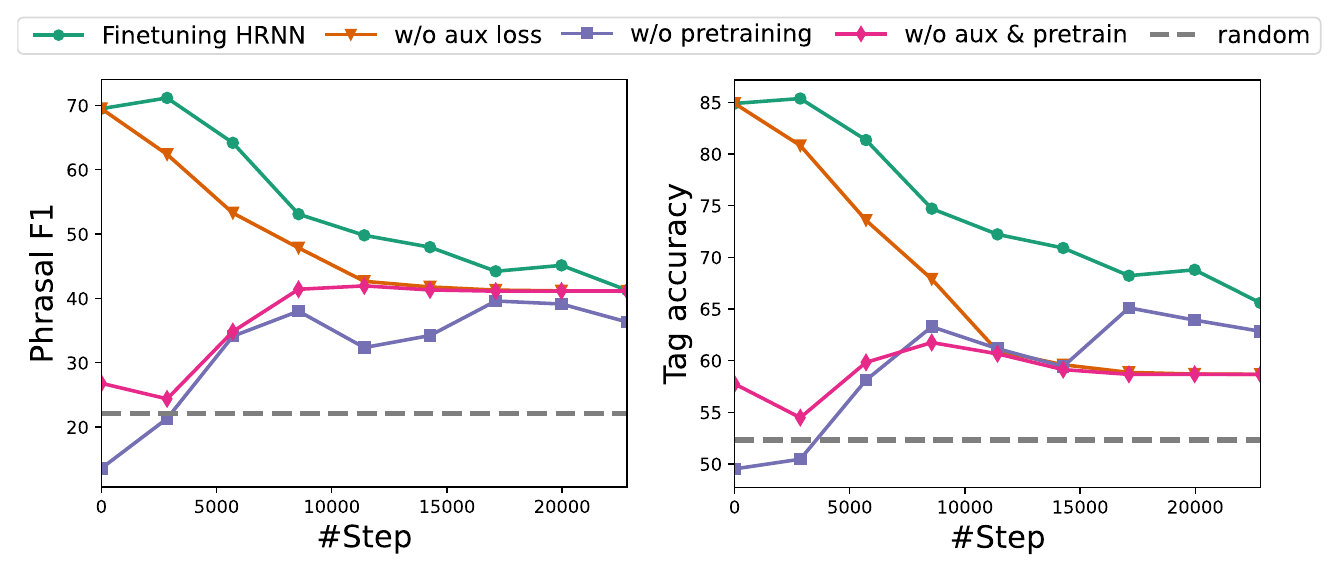}
    \caption{Ablation study of our HRNN finetuning method. We plot the learning curves in terms of phrasal F1 (top) and tag accuracy (bottom).}
    \label{fig: HRNN fine-tune ablation}
\end{figure}

Figure~\ref{fig: HRNN fine-tune ablation} presents the learning curve on the downstream dataset Gigward. As seen, the HRNN with both pretraining and auxiliary loss is the only configuration that achieves an improved chunking performance (green curves with dots), whereas other settings (without pretraining and/or without auxiliary loss) yield worse performance. This ablation study underscores the significance of both the pretraining strategy and auxiliary loss for HRNN's chunking structure discovery in downstream tasks. 

\begin{figure}[!t]
\includegraphics[width=0.95\textwidth]{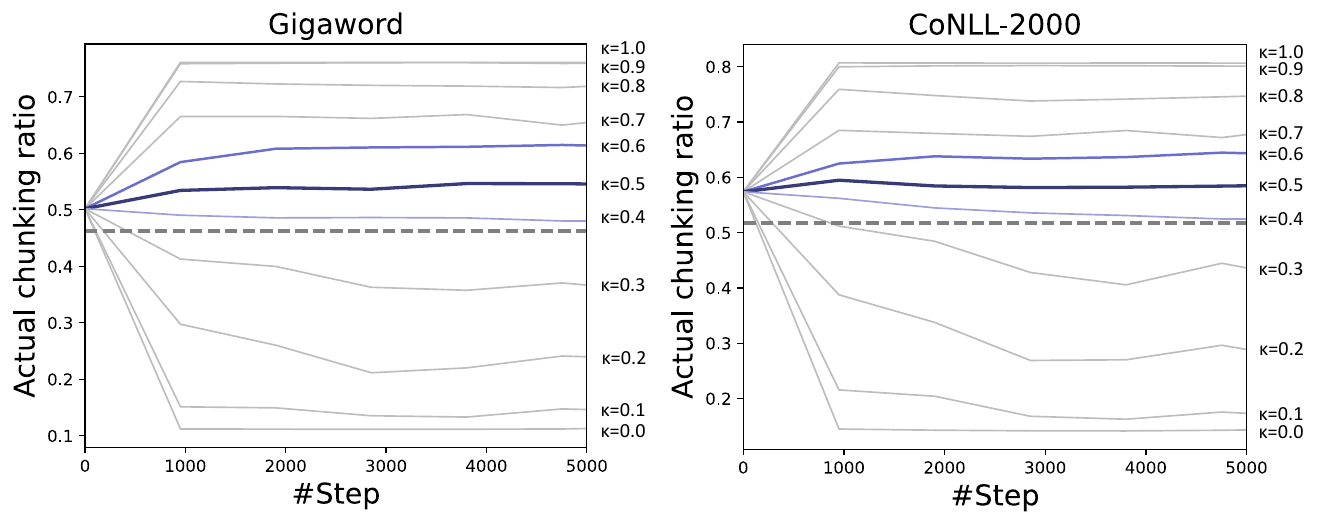}
\caption{This analysis examines the effect of the chunking strength hyperparameter $\kappa$ on the actual chunking ratio. The dashed line represents the groundtruth chunking ratio in the dataset. The purple lines indicate that the finetuned chunking performance outperforms the pretrained model. Additionally, the color depth and width of these lines indicate the ranking of the chunking performance achieved, with deeper and wider lines indicating better performance.}
\label{fig: cutrate}
\end{figure}

Moreover, the auxiliary loss provides the opportunity to control the chunking granularity. This can be shown by varying the chunking strength hyperparameter $\kappa$ in Equation~\eqref{eqn: aux loss} from $0$ to $1$ in increments of $0.1$. The results, presented in Figure~\ref{fig: cutrate}, confirm that $\kappa$ is indeed able to control the ratio of predicted cuts (i.e., chunking granularity), although the percentages may not match as there could be other factors contributing to the model's behavior. 
Interestingly, closely aligning the predicted and groundtruth chunking ratios does not lead to the best performance. The optimal $\kappa$ value turns out to be $0.5$ in our experiments, even though the predicted ratio slightly exceeds the groundtruth ratio.

We also notice an intriguing phenomenon in Figure~\ref{fig: HRNN fine-tune ablation}: all models converge to a phrase F1 score of approximately $40$. By manually checking their predictions, we realize that the models will eventually predict a cut for almost all steps, e.g., every word is a chunk by itself. Even with our full approach (green curves), the chunking performance is only improved within limited training steps, after which it drops as well. This raises a curious question about the dynamics of the chunking structure during the process of downstream-task finetuning.

\begin{figure}[!t]
\includegraphics[width=0.86\linewidth]{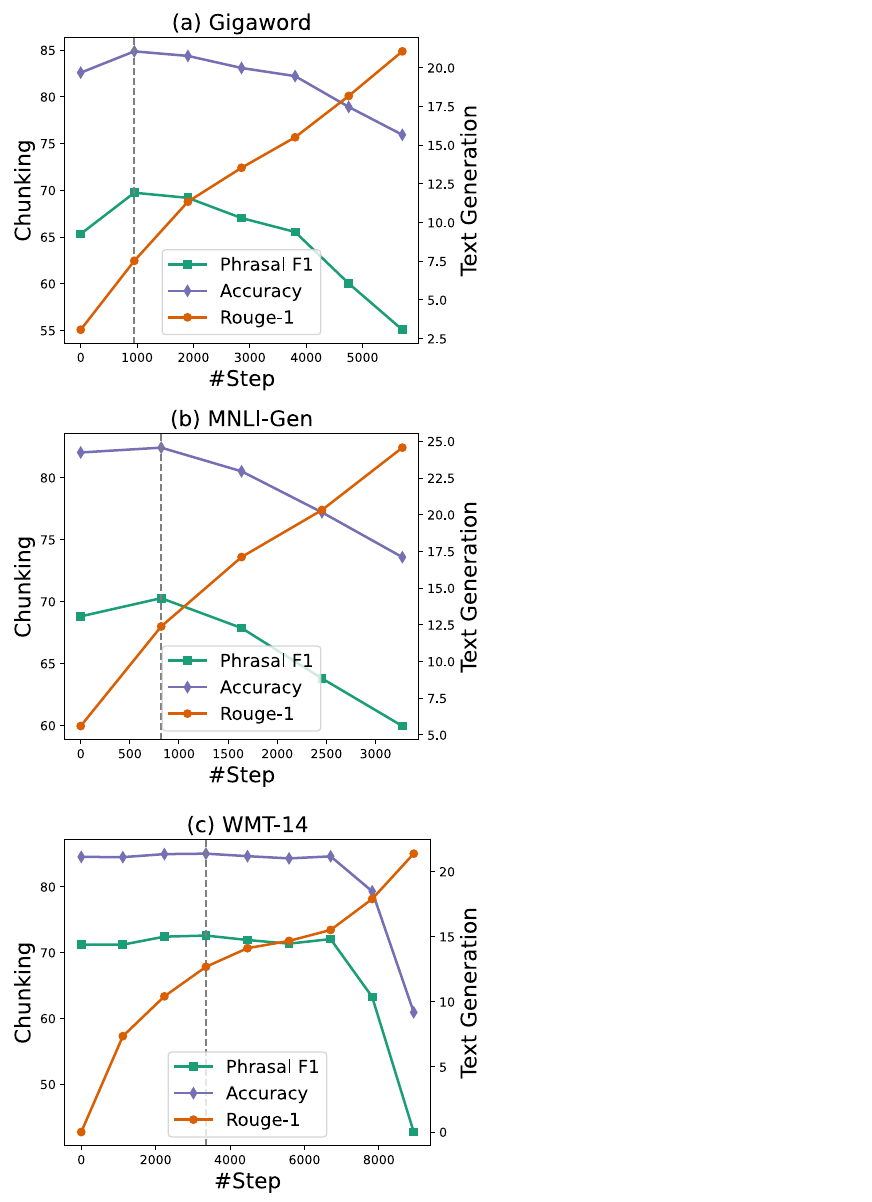}
\caption{Learning curves of finetuning HRNN with text generation tasks. Phrasal F1 and accuracy reflect the chunking performance, while the Rouge1 score measures the text generation performance. The gray dashed lines mark the steps when HRNN achieves the highest chunking performance.}
\label{fig:learning curve}
\end{figure}

To further investigate this, we show in Figure~\ref{fig:learning curve} the learning curves of both chunking performance and the downstream-task performance, evaluated by Rouge1 score~\cite{lin-2004-rouge}. The learning curves show that the emerging of meaningful chunking structures is a transient phenomenon only during the early stage of downstream-task learning. The chunking performance drops after several thousand iterations, while the downstream-task performance continues to increase monotonically.

We hypothesize that chunking information (or linguistic information in general) is a meaningful abstraction of human languages. When our exquisitely designed model has not mastered the downstream NLP task well (e.g., at an early stage), it will produce structures akin to linguistically defined chunks, which serve as a convenient intermediate step for the downstream task. As training proceeds, however, such linguistically meaningful structures are abandoned by the model, as deep neural networks are known to achieve high performance with end-to-end unexplainable features~\cite{wu2023weakly}. Our work sheds light on future research in linguistics.

\begin{figure}[!t]
\includegraphics[width=0.95\linewidth]{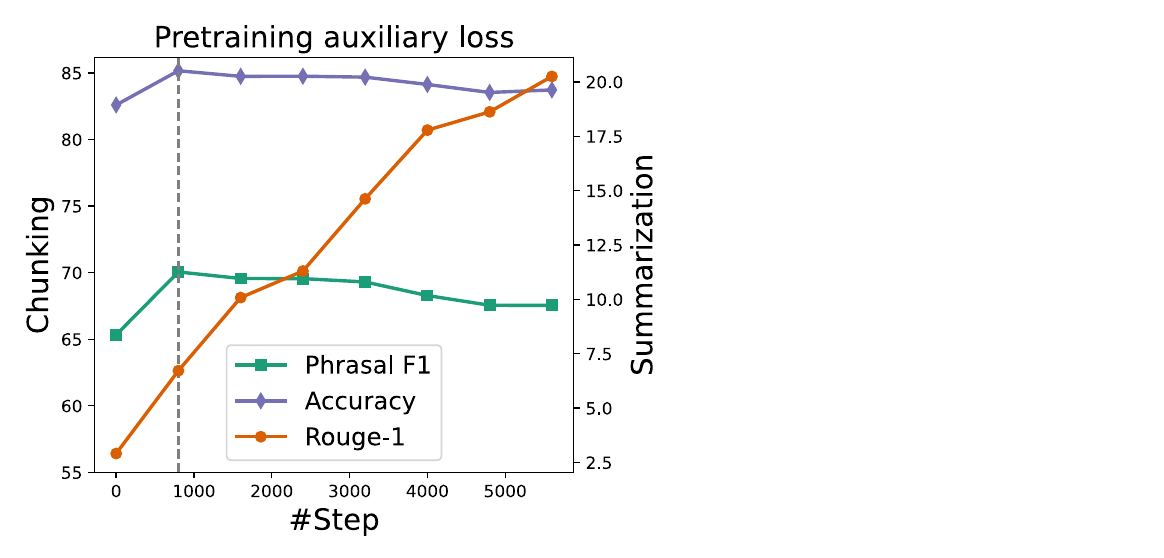}
\caption{Using our pretraining objective as the auxiliary loss during the finetuning of the downstream summarization task.}
\label{fig: ablate continual learning curve}
\end{figure}

Nevertheless, the drop of chunking performance can be easily alleviated by blending the pretraining and finetuning objectives. As seen in Figure~\ref{fig: ablate continual learning curve}, this treatment achieves similar peak chunking performance, which is mostly maintained during the finetuning stage. However, we do not adopt it as our main method. When we were developing our approach, our initial thought was that the pretrained chunking structures may not be perfect, so we refrained from adding the pretraining loss. Our finetuning allows the model to deviate from the pretraining structure, which unexpectedly leads to the intriguing phenomenon (i.e., the emerging and abandoning of chunking structures).

\begin{figure}[!t]
\includegraphics[width=0.95\linewidth]{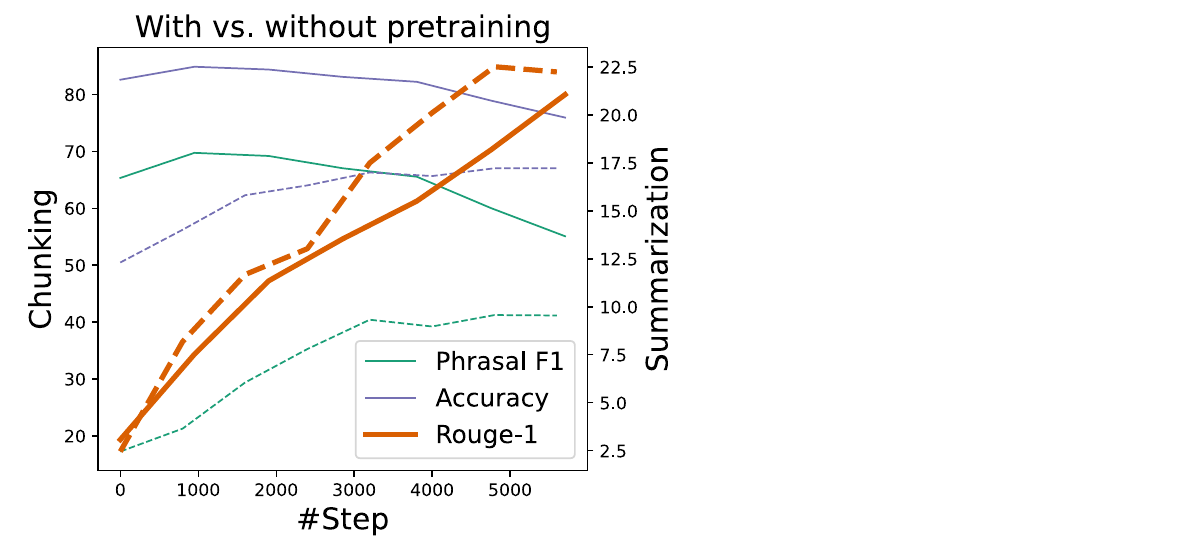}
\caption{Dashed lines represent the performance of finetuning a randomly initialized HRNN model on the downstream summarization task. Solid lines are from our full method. The chunking performance is evaluated with the in-domain source sentences.}
\label{fig: ablate pertaining learning curve}
\end{figure}

\subsubsection{Effect of Pretraining on the Downstream Task}
We study the effect of pretraining on the downstream text generation performance. We take summarization as the testbed and compare the pretrained HRNN with a randomly initialized one. Figure~\ref{fig: ablate pertaining learning curve} shows the learning curves for both chunking and summarizaation performance. Interestingly, we find that the pretrained HRNN demonstrates slightly slower training progress in the text generation task than a randomly initialized model. This is understandable because our pretrained HRNN serves as a linguistic regularizer, whereas a random initialization is more flexible. The phenomenon is consistent with the explanation in \citet{li-etal-2019-imitation}: there are multiple local optimal latent structures that can accomplish the downstream task, but with meaningful initialization, the downstream task may recover the linguistically plausible structures (such as chunking and parsing).

\begin{figure}[!t]
\includegraphics[width=0.95\textwidth]{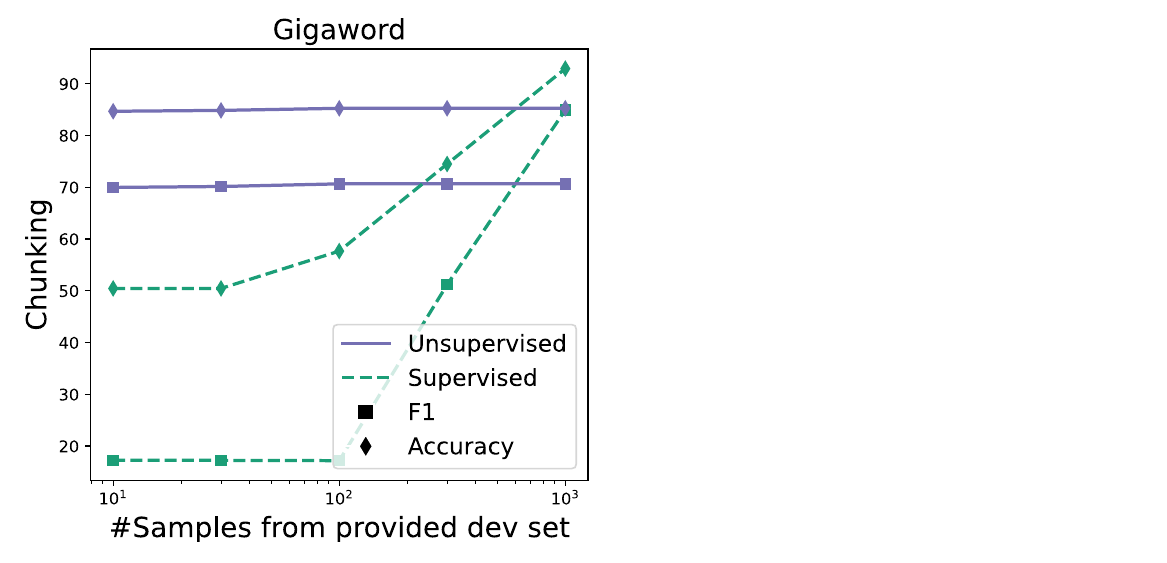}
\caption{The impact of validation set size on the performance of unsupervised and supervised HRNNs in the Gigaword chunking task.}
\label{fig: dev training}
\end{figure}

\subsubsection{Discussion on the Use of Labeled Validation Data} 
\label{section: Effect of the Size of Validation Data on Unsupervised Performance}
It is worth mentioning that we used labeled validation data in our main experiments. This is a common practice in previous unsupervised syntactic structure induction papers~\cite{shen2018ordered,drozdov-etal-2019-unsupervised-latent,shayegh2024ensemble}, with some authors claiming that such unsupervised methods are not ``fully unsupervised in the strictest sense of the term''~\cite{kim-etal-2019-compound}. Our work falls in same category. 

Certain researchers believe that using labeled validation sets is problematic for unsupervised grammar induction~\cite{shi-etal-2020-role}. We argue that not using a labeled validation set (e.g., using an unsupervised signal for validation) will eventually resort to the test set for hyperparamter tuning and model selection, because authors will have to repeatedly check their test performance to determine whether each of their validation strategies is effective or not. 

We nevertheless compare our unsupervised approach with using some labeled validation samples for training, by splitting the provided validation set into a training set and an actual validation set~\cite{shi-etal-2020-role}. In particular, we vary the number of labeled samples used, and take 70\% for training and 30\% for actual validation.

As shown in Figure~\ref{fig: dev training}, the performance of the unsupervised HRNN improves slightly when the validation set size increases from 10 to 100 samples, as the F1 score increases from 69.97 to 70.66 and  accuracy increases from 84.68 to 85.25; however, the improvement plateaus beyond this point. 

By contrast, the supervised HRNN does not show significant gains with an increase in training samples from 10 to 100, but its performance consistently improves as more data are added. Notably, while the supervised HRNN still underperforms the unsupervised model at 300 samples, its performance surpasses the unsupervised counterpart when the labeled dataset size reaches 1,000 samples.

These results indicate that our unsupervised method is robust to the validation set size, and works well with an extremely small validation set such as 10 samples. This also suggests that our approach could be useful in low-resource languages where large labeled datasets are not available. In such scenarios, even experts visually checking model outputs could be sufficient for effective model selection.

\subsubsection{Case Studies} 
\begin{figure}[!t]
\includegraphics[width=\textwidth]{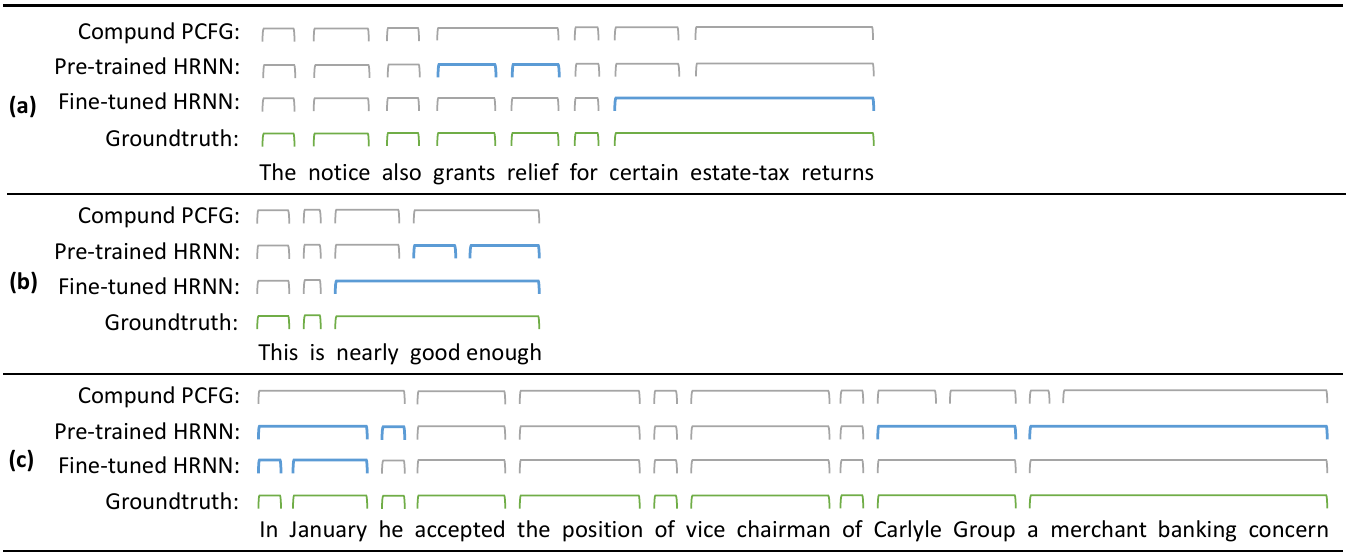}
\caption{Examples of chunking structures generated by Compound PCFG, pretrained HRNN, and finetuned HRNN are provided. Differences are highlighted in thick blue. Groundtruth chunks are also displayed in green for reference.}
\label{fig: case study}
\end{figure}

In Figure~\ref{fig: case study}, we show examples of chunking structures generated by Compound PCFG, pretrained HRNN, and finetuned HRNN in the summarization task. 

As seen, our pretrained HRNN produces longer noun phrases than Compound PCFG, such as \textit{Carlyle Group} and  \textit{a merchant banking concern} in Example (c), and finetuning HRNN can further detect longer noun phrases---such as \textit{certain estate-tax returns} in Example (a)---which are usually considered as a chunk by groundtruth labels.

Furthermore, our approach is able to correct nonsensical chunks produced by Compound PCFG. In Example (a), the two-word chunk \textit{grants relief} is split into \textit{grants} and \textit{relief} as they are not in the same semantic unit. In Example (c), the chunk \textit{In January he} is first split into \textit{In January} and \textit{he}, which makes more sense. Then the finetuned HRNN further splits \textit{In January} into \textit{In} and \textit{January}, which agree more with the groundtruth chunks.

In general, the pretrained HRNN not only effectively learns the chunking patterns from Compound PCFG, but also can smooth out its noise. The finetuned HRNN model further improves the pretrained model and achieves better performance of unsupervised chunking.

\section{Related Work}
This section briefly discusses recent advances in chunking and unsupervised grammar induction approaches, followed by recent literature on weakly supervised neuro-symbolic methods.  

\subsection{The Chunking Task}
Chunking is a prominent task in natural language processing with various applications, such as named entity recognition~\citep{wang2013joint}, syntactic analysis~\citep{zhou2012exploiting}, and fine-grained sentiment analysis~\citep{yang2013joint}. Chunking is also referred to as shallow parsing, where the goal is to separate a text into non-overlapping chunks that are syntactically connected~\citep{abney1991parsing}.

The CoNLL-2000 shared task~\citep{sang2000introduction}, derived from the Penn Treebank~\citep{marcus-etal-1993-building}, has provided a platform for system comparison of English chunking research. In the early stage of research, various classification models, such as support vector machines (SVMs; \citealp{kudo2001chunking}) and the Winnow algorithm~\citep{zhang2002text}, are applied to accomplish the chunking task. Additionally, sequential labeling models are extensively used for chunking; examples include the hidden Markov model (HMM; \citealp{molina2002shallow}) and conditional random fields (CRFs, \citealp{sha2003shallow, mcdonald2005flexible}). \citet{sun2008modeling} adopt latent-dynamic conditional random fields to the chunking task, which leads to further performance improvement.

\citet{zhou2012exploiting} propose to use chunk-level features for the chunking task, outperforming most previous research based on word-level features~\cite{kudo2001chunking, zhang2002text, molina2002shallow, mcdonald2005flexible, sun2008modeling}.
Inspired by this, we design a hierarchical RNN that captures chunk features that are formed dynamically.
We further leverage contextual word representations from a language model and enhance performance through weak supervision from downstream tasks.

\subsection{Unsupervised Syntactic Structure Induction}
Detecting syntactic structures without linguistic supervision has attracted significant interest due to its applications in scenarios with limited resources~\citep{clark-2001-unsupervised, klein2005unsupervised}.

Unsupervised parsing is one of the most widely explored areas in syntactic structure detection. It can be broadly divided into three areas: unsupervised constituency parsing~\citep{klein-manning-2002-generative, haghighi-klein-2006-prototype-driven, reichart-rappoport-2008-unsupervised, clark-2001-unsupervised}, which aims to derive a constituency parse tree for a given sentence; unsupervised dependency parsing~\citep{seginer-2007-fast, NIPS2001_89885ff2}, which focuses on identifying the grammatical relationships between words in a sentence; and unsupervised joint parsing~\citep{klein-manning-2004-corpus, shen2018neural}, which seeks to combine both constituency and dependency parsing into a unified framework.

Among these settings, unsupervised constituency parsing has been explored most. \citet{klein-manning-2002-generative} propose an approach that utilizes an expectation--maximization (EM) algorithm to model the constituency of each span. \citet{haghighi-klein-2006-prototype-driven} propose to employ a probabilistic context-free grammar (PCFG) based on manually designed features. \citet{kim-etal-2019-compound} introduce the Compound PCFG for unsupervised parsing. On the other hand, methods have been developed to obtain tagged parse trees without supervision.
\citet{reichart-rappoport-2008-unsupervised} perform clustering by syntactic features to obtain tagged parse trees. \citet{clark-2001-unsupervised} clusters sequences of tags based on their local mutual information to build tagged parse trees. 
Overall, these early studies mostly rely on heuristics, linguistic knowledge, and manually designed features.

Unsupervised parsing has revived interest in the deep learning era. \citet{socher-etal-2011-semi} propose a recursive autoencoder that constructs a binary tree by greedily minimizing the reconstruction loss. Other methods for learning recursive tree structures in an unsupervised manner include CYK-style marginalization~\citep{drozdov-etal-2019-unsupervised-latent} and Gumbel-softmax~\citep{Choi_Yoo_Lee_2018}.
\citet{yogatama2017learning} present a shift–reduce parser learned by reinforcement learning to improve performance on a downstream task such as textual entailment. However, evidence shows the above approaches do not yield linguistically plausible trees~\citep{williams-etal-2018-latent}. 

\citet{li-etal-2019-imitation} propose to transfer knowledge between multiple unsupervised parsers to improve their performance. Our pretraining method draws inspiration from such knowledge transfer, but we introduce insightful heuristics that generate chunk labels from unsupervised parsers. Additionally, we have developed a hierarchical recurrent neural network (HRNN) that learns from the chunk labels generated by our heuristics, and can further be finetuned with the downstream tasks for performance improvement.

Our proposed HRNN shares some similarities with the stack-LSTM~\citep{dyer-etal-2015-transition}, as both models aim to capture hierarchical structures of language. While the stack-LSTM can generate deep tree structures, our approach is restricted to two layers and is more suited to the chunking task.

Regarding unsupervised chunking, the speech processing community has addressed the task as a component of speech detection systems and utilized acoustic information to determine the chunks~\citep{pate-goldwater-2011-unsupervised, barrett-etal-2018-unsupervised}. Our work only relies on textual information and views unsupervised chunking as a new task of inducing syntactic structure.

\subsection{Weakly Supervised Neuro-Symbolic Methods}
In recent years, neuro-symbolic approaches have gained significant attention from the AI and NLP communities for interpreting deep learning models~\citep{liu2023neurosymbolic}. Many of these approaches treat neuro-symbolic methods as combining deep learning and symbolic representations. For example, \citet{lei-etal-2016-rationalizing} extract key phrases that rationalize the neural model's predictions on classification tasks.
\citet{10.5555/3304222.3304359} model text classification as a sequential decision process, and propose a policy network that is able to skip reading and make decisions. \citet{liang-etal-2017-neural} and \citet{pmlr-v70-mou17a} perform SQL-like execution based on input text for semantic parsing. 

However, training a neuro-symbolic model is more complicated than an end-to-end neural model, because the explainable discrete latent structure lacks ground truth for direct supervision. As a result, these approaches are typically trained in a weakly supervised manner, where the training signals only exist at the end of the entire model. Reinforcement learning or its relaxation, such as Gumbel-softmax~\citep{jang2017categorical}, is often used in the training procedure due to the non-differentiable latent structure. Recently, \citet{wu2023weakly} propose neural fuzzy logic to explain the predictions of the natural language inference model, which is trained end-to-end. 
Our HRNN model uses switching gates as the intermediary of the weak supervision from text generation for chunk induction, which allows it to be trained with different downstream datasets.

\section{Limitations and Future Work}

One limitation of this paper is that we have only applied our approach to the English language, as our HRNN chunker is pretrained by unsupervised parsers (which are mainly built for English) and our maximal left-branching heuristic for chunk induction utilizes the right-branching prior of English. To generalize unsupervised grammar induction to other languages, we may develop models that incorporate language-specific prior, e.g., left branching for Japanese~\cite{martin2003reference}. Another viable path is to translate a sentence into English, perform unsupervised syntactic structure prediction on the English translation, and map the structure back to the original language. Extending this, multilingual unsupervised grammar induction is also a promising research direction, since our results show that translation as a downstream task helps the discovery of syntactic structures.

We acknowledge that the current unsupervised chunking performance is still much lower than the supervised one, but with persistent research efforts, we expect it will be much improved in the near future. A recent study in our team addresses unsupervised parsing and has achieved close performance to supervised methods, especially for domain-shift scenarios~\cite{shayegh2024ensemble, shayegh2024ensemble1}.

Our long-term vision is to build an interpretable NLP system, trained in an end-to-end fashion, that can detect semantic units (chunks), determine their relationships, perform task-specific reasoning, and ultimately achieve high performance for the task of interest~\cite{liu2023neurosymbolic}. The main obstacle to this ambitious goal is the difficulty of end-to-end training: this paper performs chunking during text generation, but the obtained chunking structure is later abandoned by the model; another study of ours performs chunk detection and alignment by heuristics, after which a differentiable fuzzy logic mechanism is designed to perform logical reasoning~\cite{wu2023weakly}. In future work, we plan to put these components together and train our HRNN in the NLI reasoning task; since the granularity of chunks is usually ambiguous, training an end-to-end chunker in a downstream task may yield chunks that are more suited to the task. 
Our goal is to make traditional NLP pipelines~\cite{manning-etal-2014-stanford} end-to-end learnable. We anticipate that, with proper development, such a neuro-pipeline approach will achieve high performance in end tasks similar to black-box neural networks, while excelling in terms of explainablility and interpretability.

\section{Conclusion} 
In this paper, we propose a framework to induce chunks in an unsupervised manner as syntactic structure discovery. Specifically, we propose a hierarchical RNN (HRNN) with switching gates to learn from the chunk labels induced by a state-of-the-art unsupervised parser. The HRNN is further finetuned with various downstream text generation tasks to achieve better chunking results. 

The experiments show that our approach largely improves unsupervised chunking. Additionally, we provide comprehensive analyses of our HRNN, the pretraining stage using unsupervised parser-induced labels, as well as the finetuning stage in a downstream task. We also point out limitations and discuss future directions. 

\section*{Acknowledgments}
We thank the reviewers and editors for their efforts and valuable comments. The research is supported in part by the Natural Sciences and Engineering Research Council of Canada (NSERC) under Grant No.~RGPIN2020-04465, an Alberta Innovates Project, the Amii Fellow Program, the Canada CIFAR AI Chair Program, a UAHJIC project, a donation from DeepMind, and the Digital Research Alliance of Canada (alliancecan.ca).

\starttwocolumn
\bibliography{compling_style}

\end{document}